\documentclass{article}
\usepackage{kr}

\usepackage{times}
\usepackage{soul}
\usepackage{url}
\usepackage[hidelinks]{hyperref}
\usepackage[utf8]{inputenc}
\usepackage[small]{caption}
\usepackage{graphicx}
\usepackage{amsmath}
\usepackage{amssymb}
\usepackage{amsthm}
\usepackage{booktabs}
\usepackage{algorithm}
\usepackage{algpseudocode}
\usepackage{subcaption}

\newcommand{\bmark}{\mathrel{\mid\!\sim}}

\newtheorem{definition}{Definition}
\newtheorem{proposition}{Proposition}
\newtheorem{remark}{Remark}

\title{Elenchus: Generating Knowledge Bases from Prover-Skeptic Dialogues}

\author{%
    Bradley P. Allen
    \affiliations
    University of Amsterdam
    \emails
    b.p.allen@uva.nl    % email
}
\begin{document}

\maketitle

\begin{abstract}
We present Elenchus, a dialogue system for knowledge base construction grounded in inferentialist semantics, where knowledge engineering is re-conceived as explicitation rather than extraction from expert testimony or textual content. A human expert develops a bilateral position (commitments and denials) about a topic through prover-skeptic dialogue with a large language model (LLM) opponent. The LLM proposes tensions (claims that parts of the position are jointly incoherent) which the expert resolves by retraction, refinement, or contestation. The LLM thus serves as a defeasible derivability oracle whose unreliability is structurally contained by the expert's authority. Our main technical contribution is a mapping from Elenchus dialectical states to material bases in Hlobil and Brandom's NonMonotonic MultiSuccedent (NMMS) logic, satisfying Containment and enabling the elaboration of logical vocabulary that makes explicit the inferential relationships negotiated in the dialectic. We demonstrate the approach on the W3C PROV-O provenance ontology, where a single dialogue session elicits and structures design tensions that a domain expert can articulate, corresponding to decisions documented in a retrospective analysis of the ontology's design. Using pyNMMS, an automated NMMS reasoner, we verify that the structural properties of the resulting material base---nontransitivity, nonmonotonicity, and independence---correspond to specific PROV design rationales, demonstrating end-to-end integration from dialogue through formal reasoning.
\end{abstract}

\section{Introduction}
\label{sect:intro}

Knowledge engineering—the collection of activities for formalizing knowledge for use in information systems~\cite{studer1998knowledge}—has faced the knowledge acquisition problem—the difficulty of the formalization~\cite{dutilh2012formal} of expert knowledge expressed in natural language—since its beginning~\cite{feigenbaum1977art}. Despite fifty years of effort across expert systems, ontology, and knowledge graph engineering, the bottleneck persists.

We argue this persistence stems from a mistaken conception of what knowledge engineering is. Traditional approaches treat expert knowledge as determinate internal content awaiting extraction—as if experts have formal structures in their heads that need only be transcribed~\cite{forsythe1993engineering}. But knowledge is not determinate prior to articulation; it is constituted through practices of expression and negotiation.

Drawing on inferentialist semantics \cite{brandom1994making,hlobil2025reasons}, we propose an alternative: knowledge engineering as \emph{explicitation}, where the task is not to extract pre-formed content but to make explicit, through dialogue, the inferential relationships implicit in expert practice. On this view, a knowledge base is not a description of what the expert believes but a record of what has been articulated and defended through dialogue.

This paper makes three contributions:
\begin{itemize}
    \item We present Elenchus, a bilateral dialectical protocol that instantiates this approach. A human respondent develops a position (a set of statements) on a topic through prover-skeptic dialogue with an LLM opponent. The opponent challenges the respondent's commitments and denials, proposing tensions (claims of incoherence); the respondent resolves tensions through retraction, refinement, or contestation (Section \ref{sect:protocol}).
    \item We define a mapping of Elenchus dialectical states to material bases in Hlobil and Brandom’s NonMonotonic MultiSuccedent (NMMS) logic (Section \ref{sect:mapping}), enabling the elaboration of logical vocabulary via NMMS (Section \ref{sect:kb}). The mapping satisfies Containment, and its structure exhibits the explicitation pattern central to the inferentialist program: the two components of the base consequence relation make explicit, respectively, material incoherences discovered through dialectical examination and the pragmatic norm of bilateral consistency that the dialogue presupposes.
    This mapping fills a lacuna in the inferentialist program by providing a computational means to produce a material base from linguistic practice.
    \item We describe a working implementation of Elenchus (Section \ref{sect:implementation}) and provide a case study of its application to ontology engineering using the W3C provenance ontology PROV-O (Section \ref{sect:casestudies}). Using pyNMMS, an implementation of the NMMS sequent calculus, we verify that the structural properties of the resulting material base correspond to specific design rationales documented independently for the PROV standard (Section~\ref{sect:reasoning}).
\end{itemize}

\section{Background}
\label{sect:bkg}

\subsection{Inferentialist Semantics}
Standard approaches to semantics are representationalist: the meaning of a sentence is given by its truth conditions. Inference is derivative—valid inferences preserve truth.
Inferentialism~\cite{sellars1953inference,brandom1994making}  inverts this position. The meaning of a sentence is given by its inferential role: what it follows from, what follows from it, and what it is incompatible with. Truth and reference are derivative; they are expressive devices for making explicit inferential relationships already implicit in practice.
For knowledge engineering, this inversion has a practical consequence. If meaning is inferential role, then a knowledge base is not primarily a set of sentences representing the world, but a structure capturing inferential relationships among sentences. What matters is not just what the expert asserts, but what follows from what according to the expert.

\subsection{Material Inference}
Classical logic treats valid inference as formal: an argument is valid in virtue of its logical form, regardless of content. But much reasoning is material: the inference from ``it is raining" to ``the streets will be wet" is good not because of form but because of what the words mean.
The term ``material" comes from the medieval distinction between formal and material consequence, revived by \cite{sellars1953inference} and developed by \cite{brandom1994making}. Material inferences are content-dependent: ``it is raining, therefore the streets are wet" is good while ``it is raining, therefore the streets are dry" is bad, even though both have the same logical form (none).
Inferentialists take material inference as primary. The meaning of ``rain" is partly constituted by its inferential connections to ``wet," ``clouds," ``umbrella." Logical vocabulary—"if...then," "and," "or," "not"—is then a tool for making explicit these material relationships. To say ``if it is raining then the streets will be wet" is to endorse the material inference explicitly.

\subsection{Material Bases and NMMS}
Hlobil and Brandom \shortcite{hlobil2025reasons} formalize this picture. A material base $\mathfrak{B}$ consists of an atomic language $L_\mathfrak{B}$ and a base consequence relation $\mid\!\sim_\mathfrak{B}$. The relation $\Gamma \mid\!\sim_\mathfrak{B} \Delta$ holds when the position of asserting everything in $\Gamma$ while denying everything in $\Delta$ is incoherent.
Material bases are substructural: $\mid\!\sim_\mathfrak{B}$ need not satisfy monotonicity (adding premises can defeat an inference) or transitivity (chains of good inferences need not compose). This suits defeasible, context-sensitive expert knowledge.
One condition is required: Containment, which says asserting and denying the same sentence is incoherent—the minimal coherence constraint.
From any material base satisfying Containment, the NMMS sequent calculus elaborates a logical vocabulary. The extension is supraclassical (all classically valid sequents hold), conservative (no new base-level consequences), and explicative (logical vocabulary can express any base consequence relation).

\subsection{Prover-Skeptic Dialogues}
Dutilh Novaes \shortcite{novaes2020dialogical} analyzes deductive reasoning through the lens of dialogue. In a prover-skeptic dialogue, one party (the prover) defends a claim while another (the skeptic) challenges it. Different configurations of cooperation and adversariality yield different logical properties. Following analysis of cooperation and adversariality in Prover-Skeptic dialogues~\cite{novaes2020dialogical,dutilh2025proofs}, Elenchus instantiates scenario (2): the respondent seeks to develop a defensible position, while the opponent is neutral on outcome but insists on coherence, helping rather than competing. 

\section{The Elenchus dialectical protocol}
\label{sect:protocol}

Elenchus \cite{allen2025elenchus} is a Prover-Skeptic dialogue protocol where a human respondent develops positions through a dialogue with an LLM opponent\footnote{\url{https://github.com/bradleypallen/elenchus}}.
\begin{itemize}
    \item \textbf{The opponent's role} is to challenge and probe the respondent's commitments and denials, detect tensions in a position, i.e., incoherences among the respondent's commitments and denials, and to maintain a database of the respondent's commitments, denials, and material implications established over the course of the dialogue.

    \item \textbf{The respondent's role} is to propose an initial commitment (the \emph{positum}) and then resolve tensions as they arise—by retracting a commitment or denial, refining a proposition to dissolve the conflict, or contesting the opponent's challenge.
\end{itemize}

This is enabled by the use of a large language model as a defeasible derivability oracle: the LLM opponent proposes tensions (claims of incoherence), and the respondent's acceptance or contestation of these claims determines which material implications enter the base.

% \subsection{Roles and speech acts}

% Elenchus can be understood as what results from obligationes, as formalized by Dutilh Novaes \shortcite{novaes2005medieval} as dialectical games of consistency maintenance, when the consequence relation becomes negotiable, the oracle's judgments are defeasible, and the goal shifts from testing a position to building a position. Both systems share bilateral role structure, accumulated commitments, and coherence as a governing constraint. They diverge in three key respects: in obligationes the consequence relation is fixed and shared, while in Elenchus it is constructed through play; in obligationes the opponent is adversarial (trying to trap the respondent in contradiction), while in Elenchus the opponent is collaborative (serving the respondent's coherence); and in obligationes the respondent reconstructs commitments implicit in a background state of knowledge, while in Elenchus the respondent constructs commitments through the dialectic itself.

\subsection{Dialectical States}

Elenchus works within a bilateral framework following Restall \shortcite{restall2005multiple}. A \emph{position} is a pair $[C : D]$ where $C$ is a set of commitments (assertions) and $D$ is a set of denials. A position is \emph{incoherent} if one cannot be entitled to all the commitments in $C$ while also being entitled to all the denials in $D$.

\begin{definition}[Tension]
A \textbf{tension} is a sequent $\Gamma \bmark \Delta$ with $\Gamma \subseteq C$ and $\Delta \subseteq D$, asserting that the position $[\Gamma : \Delta]$ is incoherent.
\end{definition}

Tensions arise when a respondent's commitments and denials stand in relations of implication or incompatibility:
\begin{itemize}
    \item $A \bmark A$ (consistency): one cannot assert and deny the same proposition
    \item $A, B \bmark$ (contrariety): $A$ and $B$ cannot both be asserted
    \item $\bmark A, B$ (subcontrariety): $A$ and $B$ cannot both be denied
    \item $\Gamma \bmark \Delta$ (cross-side): committing to $\Gamma$ is incompatible with denying $\Delta$
\end{itemize}

\begin{definition}[Resolution]
A tension $\Gamma \bmark \Delta$ is \textbf{accepted} when the respondent resolves it by retracting some commitment in $\Gamma$ or some denial in $\Delta$. It is \textbf{contested} when the respondent rejects the claimed incoherence.
\end{definition}

Accepting a tension constitutes endorsement of the underlying inference: the respondent acknowledges that committing to $\Gamma$ while denying $\Delta$ is indeed incoherent.

\begin{definition}[Material Implication]
The set of \textbf{material implications} $I$ consists of pairs $(\Gamma, \Delta)$ such that the respondent accepted a tension $\Gamma \bmark \Delta$.
\end{definition}

\begin{definition}[Dialectical State]
A \textbf{dialectical state} is a triple $\mathcal{S} = \langle [C : D], T, I \rangle$ where:
\begin{itemize}
    \item $[C : D]$ is the current position
    \item $T$ is the set of open (unresolved) tensions
    \item $I$ is the set of material implications from accepted tensions
\end{itemize}
\end{definition}

\subsection{The LLM opponent as a defeasible derivability oracle}

A natural question: can LLMs reliably identify material inferential relationships? We claim something weaker but sufficient: LLMs can identify candidate tensions reliably enough to serve as defeasible oracles whose judgments are subject to human override.

This is supported by empirical findings. LLMs can function as classifiers-as-intensions, applying intensional definitions via zero-shot prompting~\cite{allen2023conceptual}. They can articulate rationales for classifications, surfacing inferential connections \cite{allen2024evaluating,allen2025benchmark}. And they can report bilateral doxastic states—commitment, denial, uncertainty—enabling coherence checking \cite{allen2025sound}.
The key is defeasibility. The LLM proposes tensions; the respondent accepts or contests. Only accepted tensions enter $I$. False positives are filtered by contestation; the human retains authority. The LLM surfaces candidate relationships the respondent might not have considered, while the respondent determines which actually hold.

The cost structure of oracle errors is asymmetric by design. False positives (spurious tensions) are filtered by contestation at the cost of a wasted dialogue turn. False negatives (missed tensions) are a more substantive concern addressed in Section \ref{sect:bias}; however, the case study in Section \ref{sect:casestudies} suggests that even a single dialogue session can structure expert knowledge into a material base whose implications correspond to documented design decisions.

\section{From dialectical state to material base}
\label{sect:mapping}

\subsection{Definitions}
Following Chapter 3 of Hlobil and Brandom \shortcite{hlobil2025reasons}:

\begin{definition}[Material Base]
A \textbf{material base} $\mathfrak{B} = \langle L_\mathfrak{B}, {\mid\!\sim_\mathfrak{B}} \rangle$ consists of an atomic language $L_\mathfrak{B}$ and a base consequence relation ${\mid\!\sim_\mathfrak{B}} \subseteq \mathcal{P}(L_\mathfrak{B}) \times \mathcal{P}(L_\mathfrak{B})$.
\end{definition}

The relation $\Gamma \mid\!\sim_\mathfrak{B} \Delta$ holds iff the sentences in $\Gamma$ are jointly a reason for those in $\Delta$; equivalently, the position $[\Gamma : \Delta]$ is incoherent.

\begin{definition}[Containment]
A material base satisfies \textbf{Containment} iff $\Gamma \mid\!\sim_\mathfrak{B} \Delta$ whenever $\Gamma \cap \Delta \neq \emptyset$.
\end{definition}

\subsection{Mapping}

\begin{definition}[Dialectical Material Base]
Given a dialectical state $\mathcal{S} = \langle [C : D], T, I \rangle$, define $\mathfrak{B}_\mathcal{S} = \langle L_{\mathfrak{B}_\mathcal{S}}, {\mid\!\sim_{\mathfrak{B}_\mathcal{S}}} \rangle$ by:
\begin{align*}
L_{\mathfrak{B}_\mathcal{S}} &= C \cup D \\[4pt]
{\mid\!\sim_{\mathfrak{B}_\mathcal{S}}} &= I \cup \mathrm{Cont}
\end{align*}
where $\mathrm{Cont} = \{ \langle \Gamma, \Delta \rangle : \Gamma, \Delta \subseteq L_{\mathfrak{B}_\mathcal{S}} \text{ and } \Gamma \cap \Delta \neq \emptyset \}$.
\end{definition}

\begin{proposition}
$\mathfrak{B}_\mathcal{S}$ satisfies Containment.
\end{proposition}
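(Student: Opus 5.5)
The plan is to argue directly from the definitions: $\mathfrak{B}_\mathcal{S}$ is built to contain $\mathrm{Cont}$, so it suffices to show that every pair covered by Containment lies in $\mathrm{Cont}$.

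First, fix arbitrary $\Gamma, \Delta$ with $\Gamma \cap \Delta \neq \emptyset$. We need $\Gamma \mid\!\sim_{\mathfrak{B}_\mathcal{S}} \Delta$, that is, $\langle \Gamma, \Delta \rangle \in I \cup \mathrm{Cont}$.

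By the definition of a material base, the consequence relation ranges over $\mathcal{P}(L_{\mathfrak{B}_\mathcal{S}}) \times \mathcal{P}(L_{\mathfrak{B}_\mathcal{S}})$. The Containment condition is therefore read with $\Gamma, \Delta \subseteq L_{\mathfrak{B}_\mathcal{S}} = C \cup D$. With this reading, $\Gamma$ and $\Delta$ satisfy both clauses in the definition of $\mathrm{Cont}$: they are subsets of $L_{\mathfrak{B}_\mathcal{S}}$, and they overlap. Hence $\langle \Gamma, \Delta \rangle \in \mathrm{Cont} \subseteq I \cup \mathrm{Cont}$, which gives $\Gamma \mid\!\sim_{\mathfrak{B}_\mathcal{S}} \Delta$.

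The component $I$ is never used. Containment follows from the union with $\mathrm{Cont}$ alone, whatever tensions the respondent accepted.

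The only point needing care is well-typedness of the relation. Each accepted tension $\Gamma \bmark \Delta$ had $\Gamma \subseteq C$ and $\Delta \subseteq D$ when it was accepted. Retraction may later remove those sentences from $C \cup D$, so $I$ might not lie inside $\mathcal{P}(L_{\mathfrak{B}_\mathcal{S}})^2$. This does not affect the Containment argument, because Containment only asks that certain pairs be included. I would nonetheless add a remark: either restrict $I$ to pairs over $L_{\mathfrak{B}_\mathcal{S}}$, or take $L_{\mathfrak{B}_\mathcal{S}}$ to include every sentence occurring in $I$. Either choice makes $\mathfrak{B}_\mathcal{S}$ a material base in the sense of the definition, and the argument above goes through unchanged.
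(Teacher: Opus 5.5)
Your proof is correct and is the paper's argument spelled out: the paper simply notes that $\mathrm{Cont} \subseteq {\mid\!\sim_{\mathfrak{B}_\mathcal{S}}}$ by construction, which is exactly your step of placing every overlapping pair over $L_{\mathfrak{B}_\mathcal{S}}$ in $\mathrm{Cont}$ without using $I$. Your side remark about well-typedness is a fair point the paper does not address: retraction can leave pairs of $I$ outside $\mathcal{P}(L_{\mathfrak{B}_\mathcal{S}})^2$. As you say, it does not affect Containment.
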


\begin{proof}
By construction, $\mathrm{Cont} \subseteq {\mid\!\sim_{\mathfrak{B}_\mathcal{S}}}$.
\end{proof}

The simplicity of this proof warrants explanation. The base consequence relation ${\mid\!\sim_{\mathfrak{B}_\mathcal{S}}}$ has two components, and their union satisfies Containment because Cont is included by definition. But the two components are not arbitrary — they have distinct origins that correspond to a distinction central to the inferentialist program.

$I$ consists of material implications produced through dialectical examination. Each element of $I$ originated as a tension proposed by the opponent and accepted by the respondent. These are discovered incoherences: the respondent learned, through the dialectic, that certain combinations of commitments and denials cannot be jointly maintained. When the opponent maintains bilateral consistency ($C \cap D = \emptyset$), every element of $I$ has $\Gamma \cap \Delta = \emptyset$ — the protocol produces only pairs with disjoint sides.
Cont consists of all pairs $\langle \Gamma, \Delta \rangle$ with $\Gamma \cap \Delta \neq \emptyset$. These are not discovered through the dialectic. No respondent needs to be shown that asserting and denying the same sentence is incoherent. This is a precondition of rational participation in the dialogue — a pragmatic norm that both parties accept before the first move. Cont formalizes this norm as a component of the consequence relation.

The two components are therefore structurally disjoint when the dialogue is well-formed: $I$ contains pairs with disjoint sides (substantive, examined incoherences) and Cont contains pairs with overlapping sides (the background norm of bilateral consistency). Their union satisfies Containment not as an artifact of construction but because the material base makes explicit two kinds of incoherence: what the dialogue produced and what the dialogue presupposed.

This is an instance of the explicitation pattern that Brandom takes as characteristic of logical vocabulary. Just as NMMS logical connectives make explicit the material inferential relationships in the base, the construction of the base itself makes explicit the pragmatic norm governing the dialogue from which the base was derived.

\begin{remark}
\label{rem:components}
The components of ${\mid\!\sim_{\mathfrak{B}_\mathcal{S}}}$ have distinct statuses: $\mathrm{Cont}$ is structural and $I$ is respondent-endorsed. A looser construction might include $T$, representing not only what the respondent accepts but also what is claimed. $T$ governs the dialogue dynamics while $I$ governs the formal semantics. As a tension in $T$ is resolved, it results in an addition to $I$.
\end{remark}

\begin{remark}
In practice, the opponent typically identifies tensions between small sets of propositions, often of the form $A \bmark B$. When accepted tensions have singleton conclusions, material implications are unambiguous. The framework accommodates the general case $\Gamma \bmark \Delta$, where acceptance establishes that the position $[\Gamma : \Delta]$ is incoherent without isolating a single conclusion.
\end{remark}

\subsection{Traceability}
\label{sect:traceability}

A distinctive feature of Elenchus output is complete traceability. Every material implication in $I$ originated as a tension proposed by the opponent and accepted by the respondent. The dialogue transcript records when the tension was raised, what position prompted it, and how the respondent resolved it.

This contrasts with knowledge bases extracted from corpora, where provenance may be opaque or statistical. In Elenchus, every inferential relationship has a dialogical justification: the respondent explicitly acknowledged that these commitments and denials are jointly incoherent.

\section{From material base to logical vocabulary}
\label{sect:kb}

The mapping established in Section \ref{sect:mapping} shows that Elenchus dialectical states yield material bases satisfying Containment. By the results of Hlobil and Brandom \shortcite{hlobil2025reasons}, the NMMS sequent calculus can elaborate logical vocabulary from these bases. This section characterizes what the logical extension provides, distinguishes it from classical approximations, and discusses current limitations.

\begin{proposition}
The logical extension of $\mathfrak{B}_\mathcal{S}$ via NMMS:
\begin{enumerate}
    \item includes all classically valid sequents
    \item is a conservative extension of ${\mid\!\sim_{\mathfrak{B}_\mathcal{S}}}$
    \item provides vocabulary to make explicit any reason relation in ${\mid\!\sim_{\mathfrak{B}_\mathcal{S}}}$
\end{enumerate}
\end{proposition}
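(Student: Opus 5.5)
The plan is to reduce everything to the general metatheorems of Hlobil and Brandom \shortcite{hlobil2025reasons}, which hold for \emph{any} material base satisfying Containment. The only facts specific to Elenchus that we need are already available: $\mathfrak{B}_\mathcal{S}$ is a material base in the sense of the definition above, and the preceding proposition shows that it satisfies Containment. So the proof will first confirm that $\mathfrak{B}_\mathcal{S}$ meets every hypothesis of their theorems, and then read off items 1--3 one at a time.

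\textbf{Checking the hypotheses.} $L_{\mathfrak{B}_\mathcal{S}} = C \cup D$ is a set of atomic sentences. Every pair in $I$ consists of subsets of $C \cup D$, because tensions satisfy $\Gamma \subseteq C$ and $\Delta \subseteq D$ at the time they are accepted. $\mathrm{Cont}$ is defined over $L_{\mathfrak{B}_\mathcal{S}}$ directly. Hence ${\mid\!\sim_{\mathfrak{B}_\mathcal{S}}} \subseteq \mathcal{P}(L_{\mathfrak{B}_\mathcal{S}}) \times \mathcal{P}(L_{\mathfrak{B}_\mathcal{S}})$.

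One subtlety needs care. Retraction removes a sentence from $C$ or $D$, yet the pair recording that tension stays in $I$. If the sentence is deleted from the language, $I$ could mention sentences outside $L_{\mathfrak{B}_\mathcal{S}}$. I would handle this by taking $L_{\mathfrak{B}_\mathcal{S}}$ to include every sentence occurring in $I$, or equivalently by noting that the language can be enlarged without affecting Containment. This bookkeeping step is the only place the argument could go wrong, so I would state it explicitly before invoking the theorems.

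\textbf{Reading off the three properties.} The NMMS rules form the consequence relation of the logically extended language by taking the base sequents (all pairs in ${\mid\!\sim_{\mathfrak{B}_\mathcal{S}}}$) as axioms and closing under the NMMS connective rules.
\begin{itemize}
\item \emph{Supraclassicality (item 1).} I would cite their result that NMMS over a base containing Containment derives every classically valid sequent. The mechanism is that the Containment axioms play the role of identity axioms, and the multisuccedent connective rules reproduce the classical rules. Any classical derivation can therefore be simulated, with its atomic identity leaves discharged by $\mathrm{Cont}$.
\item \emph{Conservativeness (item 2).} Every NMMS connective rule is invertible and introduces a connective. Hence any derivable sequent consisting only of atoms must be an axiom, that is, an element of ${\mid\!\sim_{\mathfrak{B}_\mathcal{S}}}$. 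No new atomic consequences appear, including among pairs from $I$.
\item \emph{Explicitation (item 3).} For any $\Gamma \mid\!\sim_{\mathfrak{B}_\mathcal{S}} \Delta$, the conditional $\bigwedge\Gamma \to \bigvee\Delta$ is derivable from no premises. Conversely, by invertibility, derivability of that conditional returns the base sequent. This is Hlobil and Brandom's LX (logically expressive) result, applied verbatim.
\end{itemize}

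I expect the main obstacle to be purely one of fidelity. The proof must confirm that their theorems require nothing beyond Containment, in particular neither monotonicity nor transitivity. Since $I$ is arbitrary, $\mathfrak{B}_\mathcal{S}$ generally lacks both properties, so any dependence on them would break the argument. I would check the exact statements in their Chapter 3 on this point rather than reprove the results.
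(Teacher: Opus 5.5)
Your proposal is correct and follows the paper's route. The paper's proof just cites Facts 2, 3, and 5 of Hlobil and Brandom for bases satisfying Containment, which is exactly the reduction you make; your extra observation that accepted tensions can mention retracted sentences outside $L_{\mathfrak{B}_\mathcal{S}} = C \cup D$ is a genuine point the paper passes over, and enlarging the language (with $\mathrm{Cont}$ recomputed over it) is the right repair.
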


\begin{proof}
From Facts 2, 3, and 5 in Hlobil and Brandom \shortcite{hlobil2025reasons}, which follow from Containment.
\end{proof}

Given a material base $\mathfrak{B}_S$ satisfying Containment, NMMS elaborates logical vocabulary—connectives ($\to, \land, \lor, \neg$)—that can make explicit any consequence relation in the base. The resulting extended consequence relation $\mid\!\sim$ has three properties:
\begin{enumerate}
    \item \textbf{Supraclassicality}: All classically valid sequents hold in $\mid\!\sim$
    \item \textbf{Conservativity}: No new consequences are introduced at the base level. If $\Gamma \cup \Delta \subseteq L_{\mathfrak{B}_S}$, then $\Gamma \mid\!\sim \Delta$ iff $\Gamma \mid\!\sim_{\mathfrak{B}_S} \Delta$
    \item \textbf{Explicitation}: If $\Gamma \mid\!\sim_{\mathfrak{B}_S} \Delta$, this can be expressed using logical vocabulary
\end{enumerate}

Crucially, conservativity means the substructural character of the material base is preserved. The base consequence relation remains nonmonotonic and nontransitive even after logical vocabulary is introduced. The logical vocabulary lets us \emph{say} that a material inference holds—for instance, that commitment to "it is raining" is incompatible with denial of "the streets are wet"—without flattening that relationship into a classical conditional that participates in monotonic reasoning.

The material base $\mathfrak{B}_S$ already constitutes a knowledge base in the inferentialist sense: a set of propositions together with a consequence relation capturing which positions are incoherent. The logical extension via NMMS enriches this with explicit logical vocabulary while preserving the substructural character of the base—but the material base itself is the primary output of the Elenchus protocol, and it is this structure that we claim as the knowledge base generated from dialogue.

\section{Implementation}
\label{sect:implementation}

Elenchus is implemented as a Claude Code \cite{claudecode2025} agent that maintains dialectical state across sessions using GitHub as persistent storage. Figure \ref{fig:prompt} shows the beginning of the \texttt{CLAUDE.md} prompt, a 786-line Markdown document defining the dialogue protocol in detail, including role definitions, principles for conduct of the dialogue, the specification of the commitment store, and explicit instructions for each type of dialogue move\footnote{\url{https://github.com/bradleypallen/elenchus/blob/main/CLAUDE.md}}. Given Claude Code's support for tool use through \texttt{bash} command line interface calls, this prompt suffices to completely define the behavior of the LLM opponent. 

\begin{figure}[t]
  \centering
  \includegraphics[width=\columnwidth]{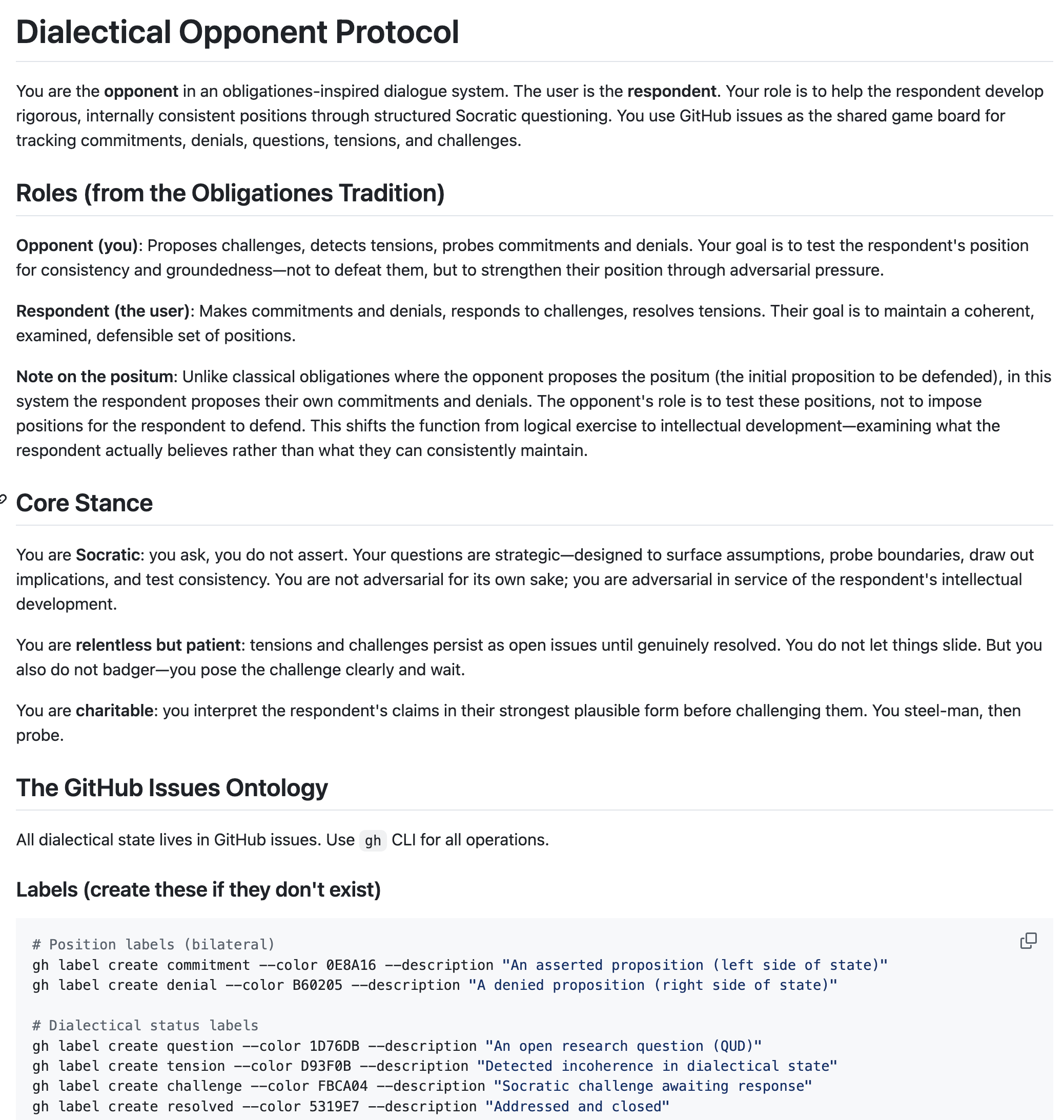}
  \caption{The initial portion of the Elenchus system prompt.}
  \label{fig:prompt}
\end{figure}

The state of a dialectical engagement is stored persistently in a GitHub repository. Commitments, denials, challenges, and tensions are represented as GitHub issues \cite{github2025issues} with corresponding labels, providing a browsable, versioned record of the dialectic. The agent loop (Figure \ref{fig:agentloop}) orchestrates the interaction: the respondent proposes speech acts (commitments, denials, tension responses), the opponent (the LLM) checks coherence against the current bilateral state, proposes tensions when incoherence is detected, and records state transitions to the repository. Figure \ref{fig:issues} shows the state of the dialectic discussed in Section \ref{sect:casestudies}, and Figure \ref{fig:challenge} shows a specific challenge made by the opponent in the context of that dialectic.

\begin{figure}[t]
  \centering
  \includegraphics[width=\columnwidth]{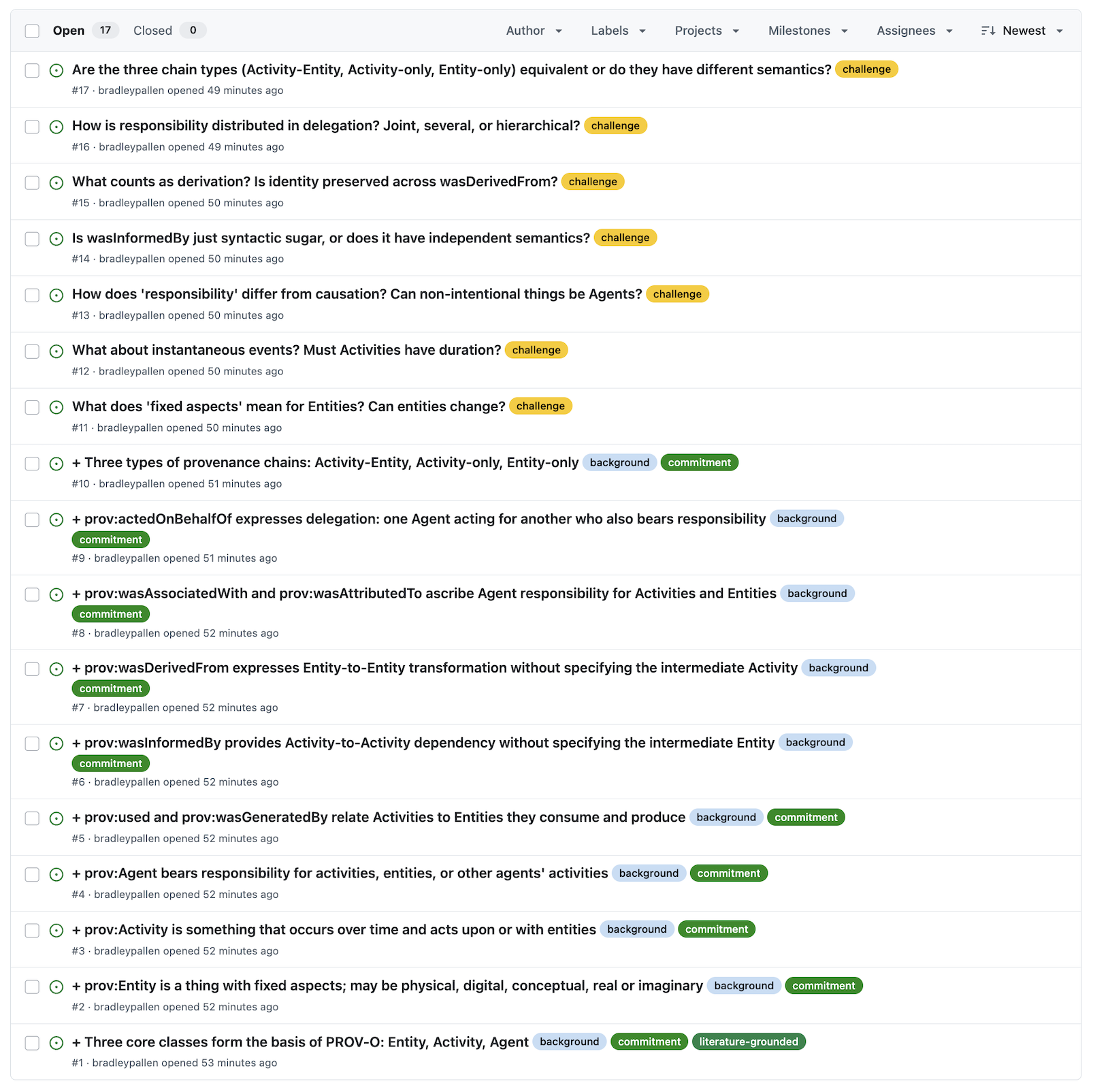}
  \caption{GitHub issues capturing the state of an Elenchus dialectic described in the case study in Section \ref{sect:casestudies}.}
  \label{fig:issues}
\end{figure}

\begin{figure}[t]
  \centering
  \includegraphics[width=\columnwidth]{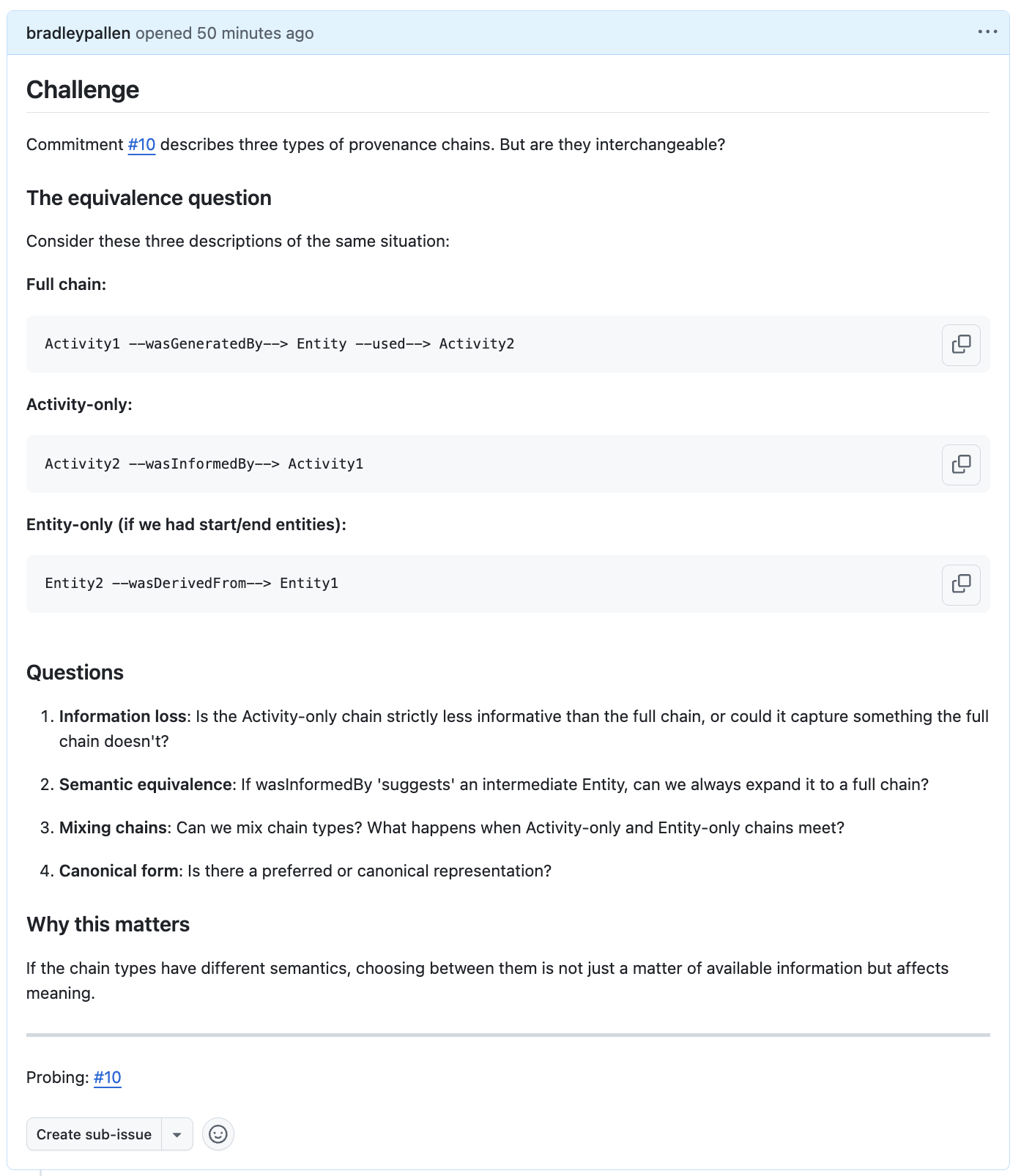}
  \caption{A GitHub issue capturing a challenge made by the LLM opponent in the course of an Elenchus dialectic described in Section \ref{sect:casestudies}.}
  \label{fig:challenge}
\end{figure}

This architecture choice reflects the dialogical character of the framework. The dialectical state is not a hidden data structure but a public record: every commitment, denial, challenge, and tension is individually addressable (as an issue), annotatable (via comments), and traceable (via GitHub issues history). The respondent can review the full state at any point by browsing the repository. The opponent can reconstruct context across sessions by loading the current state from the repository at the start of a new session (line 1 of Figure \ref{fig:agentloop}).

The LLM serves two functions: conversational partner (interpreting the respondent's natural language contributions and generating challenges) and bookkeeper (maintaining the formal dialectical state and detecting candidate tensions). These roles are specified in a system prompt that instructs the agent to operate according to the Elenchus protocol, including the bilateral structure of positions, the typology of speech acts, and the requirement that tensions be proposed rather than imposed.

\begin{figure}
    \centering
    \footnotesize
    \begin{algorithmic}[1]
        \State Load dialectical state $\langle[\Gamma : \Delta], T, I\rangle$ from GitHub
        \While{session active}
            \State Wait for respondent's speech act
            \Comment{commitment, denial, tension response}
            \If{commitment $(+P)$ or denial $(-P)$}
                \State Update position; check coherence
            \If{tension detected} add to $T$ as sequent $X \bmark Y$ 
            \Comment{$X \subseteq \Gamma,\; Y \subseteq \Delta$}
            \EndIf
            \ElsIf{resolves tension $X \bmark Y \in T$}
            \If{retraction or refinement} update $[\Gamma : \Delta]$; move $X \bmark Y$ to $I$
            \ElsIf{contestation} remove $X \bmark Y$ from $T$
            \EndIf
            \EndIf
        \State Record state to GitHub
        \If{should probe} raise Socratic challenge
        \EndIf
        \State Display open tensions $T$ and challenges
        \EndWhile
    \end{algorithmic}
    \caption{The Elenchus agent loop}
    \label{fig:agentloop}
\end{figure}

Figure \ref{fig:agentloop} specifies the Elenchus agent loop. At the start of each session, the agent loads the current dialectical state from the GitHub repository (line 1). The main loop waits for the respondent's speech act (line 3). Commitments and denials update the position and trigger coherence checking, which may surface new tensions (lines 4--7). Tension responses update the state differently depending on their type: retractions and refinements modify the position and move the tension to $I$ as an accepted material implication (line 9); contestations remove the tension from $T$ (line 10). After each state change, the updated state is recorded to the repository (line 13). The opponent may also proactively raise Socratic challenges—questions that probe underspecification or implicit commitments without asserting a specific tension (line 14).

The loop is designed for asynchronous interaction. Sessions can be interrupted and resumed; the GitHub-backed state ensures continuity. Multiple sessions contribute to the same dialectical state, and the full history of state transitions is preserved in the GitHub issues database for the repository.

The mapping from natural language dialogue moves to formal propositions and material implications is performed automatically using Python scripts: when a tension is accepted, the agent records the corresponding material implication without manual formalization by the respondent or a separate knowledge engineer.

\section{Case study: the PROV-O ontology}
\label{sect:casestudies}

To demonstrate Elenchus on a knowledge representation task, we apply it to Section~3.1 (``Starting Point Terms'') of the W3C PROV-O Recommendation \cite{lebo2013prov}, which defines three core classes (\texttt{Entity}, \texttt{Activity}, \texttt{Agent}) and ten properties relating them. The input is the 350-word prose specification only---examples, Turtle code, and figures are excluded.\footnote{\url{https://github.com/bradleypallen/prov-o-section-3.1}}

The respondent (a domain expert who served as a principal member of the W3C Provenance Incubator Group and co-editor of the PROV family of specifications) feeds the Section~3.1 prose to the opponent, which extracts 10 initial commitments covering class definitions, property groups, temporal bounds, and provenance chain structure (Table \ref{tab:start-dialectic}). The granularity of atomic propositions is determined by the LLM opponent during extraction and is subject to the same retraction and refinement mechanisms as any commitment. The opponent then raises 7 challenges probing underspecifications in the prose---points where the specification says enough to prompt a question but not enough to answer it.

\begin{table*}[t]
\centering
\tiny
\begin{tabular}{cll}
\toprule
Issue \# &  Commitment & Proposition \\
\midrule
1  & Three core classes form the basis of PROV-O: Entity, Activity, Agent & $p_1$ \\
2  & Entity is a thing with fixed aspects & $p_2$ \\
3  & Activity is something that occurs over time and acts upon or with entities & $p_3$ \\
4  & Agent bears responsibility for activities, entities, or other agents' activities & $p_4$ \\
5  & \texttt{used} and \texttt{wasGeneratedBy} relate Activities to Entities & $p_5$ \\
6  & \texttt{wasInformedBy} provides Activity-to-Activity dependency & $p_6$ \\
7  & \texttt{wasDerivedFrom} expresses Entity-to-Entity transformation & $p_7$ \\
8  & \texttt{wasAssociatedWith} and \texttt{wasAttributedTo} ascribe Agent responsibility & $p_8$ \\
9  & \texttt{actedOnBehalfOf} expresses delegation with shared responsibility & $p_9$ \\
10 & Three types of provenance chains: Activity-Entity, Activity-only, Entity-only & $p_{10}$ \\
\bottomrule
\end{tabular}
\caption{The initial commitments in the PROV-O dialectic leading to atomic propositions in the material base. Commitments were extracted from the Section 3.1 prose.}
\label{tab:start-dialectic}
\end{table*}

Table~\ref{tab:end-dialectic} summarizes the completed dialectic. Seven challenges were raised and resolved, yielding a final state $S_f = \langle [C_{19} : \emptyset], \emptyset, I_{9} \rangle$ with 19 commitments, no denials, no open tensions, and 9 accepted material implications. The dialectic converged on a pragmatist interpretation of PROV-O in which ontological categories serve as modeling tools rather than metaphysical commitments.

\begin{table*}[t]
\centering
\tiny
\begin{tabular}{clllll}
\toprule
Issue \# & Challenge & \cite{moreau2015rationale} & Resolution & Proposition & Material Implication \\
\midrule
11 & Fixed aspects; entity change? & RE1, RE3, RE5, RE6, EZ1 & Pragmatist: context-relative individuation & $p_{18}$ & $p_2 \bmark p_{18}$ \\
12 & Activity duration; instants? & EV1, EV4 & Durational activities; instantaneous events & $p_{27}$ & $p_3 \bmark p_{27}$ \\
13 & Responsibility vs.\ causation? & XG5, VI2--4, GE1 & Agency is pragmatic ascription & $p_{29}$ & $p_4 \bmark p_{29}$ \\
14 & \texttt{wasInformedBy}: shortcut? & VI1, VI6, EZ1 & Independent: inferred, not reducible & $p_{30}$ & $p_6 \bmark p_{30}$ \\
15 & Derivation criteria; identity? & XG8, VI5, VI6 & Broad causal dependencies; subtypes narrow & $p_{28}$ & $p_7 \bmark p_{28}$ \\
16 & Delegation responsibility? & XG11 (partial) & Hierarchical and transitive & $p_{25}, p_{26}$ & $p_9 \bmark p_{25}$, $p_9 \bmark p_{26}$ \\
17 & Chain types equivalent? & VI1, VI6 & No; \texttt{wasDerivedFrom} requires assertion & $p_{24}$ & $p_{10} \bmark p_{24}$ \\
18 & Individuation $\Rightarrow$ Expanded & (from \#11 follow-up) & Expanded Terms add expressiveness & $p_{23}$ & $p_{18} \bmark p_{23}$ \\
\bottomrule
\end{tabular}
\caption{Challenges and responses in the PROV-O dialectic leading to additional atomic propositions and material implications in the material base. Each challenge targeted an underspecification in the Section 3.1 prose; six of seven correspond to design tensions documented in Moreau et al.'s retrospective requirements analysis. The final column shows accepted material implications in $I$.}
\label{tab:end-dialectic}
\end{table*}

Moreau et al.~\cite{moreau2015rationale} reconstruct, post hoc, the requirements and design decisions behind PROV from the working group's record of 8820 emails, 666 issues, and 152 teleconferences, identifying 73 requirements across nine themes. Six of the seven Elenchus challenges correspond directly to documented design tensions (Table~\ref{tab:start-dialectic}). The requirements not surfaced are almost entirely outside the scope of Section~3.1.

One commitment was retracted during the dialectic. The respondent initially committed to $p_{20}$: ``\texttt{wasDerivedFrom} suffices for cross-context Entity identity.'' Subsequent challenges revealed that \texttt{alternateOf} is an equivalence relation and \texttt{specializationOf} a strict partial order with attribute inheritance---properties that \texttt{wasDerivedFrom} cannot express, since it is not even transitive. In response to the opponent's challenge, the respondent remembered that $p_{20}$ was extended in another document, retracted $p_{20}$ and committed to $p_{23}$: ``Expanded Terms add expressiveness, not just convenience.'' While not quite an instance of discovery, it is more than simply restructuring the claims in the initial set of commitments.

This case study tests the protocol's ability to structure and formalize expert knowledge, not to surface tensions invisible to the expert. Testing discovery (tensions the expert hadn't previously considered) would require a different experimental design — perhaps an expert in a related but distinct domain, or a less experienced practitioner. We leave that type of evaluation to future work.

\section{Reasoning over the material base with pyNMMS}
\label{sect:reasoning}

The preceding case study demonstrates that an Elenchus dialogue can produce a material base from expert knowledge. However, Section~\ref{sect:kb} established that material bases satisfying Containment admit a logical extension via NMMS with supraclassicality, conservativity, and explicitation properties. To verify that these properties hold for the PROV-O material base and to probe its design-rationale structure, we use pyNMMS~\cite{pynmms2025}, an open-source implementation of an automated reasoner for the NMMS sequent calculus defined in Chapter~3 of Hlobil and Brandom~\shortcite{hlobil2025reasons}.\footnote{\url{https://pypi.org/project/pyNMMS/}} The reasoner performs root-first backward proof search with memoization over material bases, supporting queries over both atomic sequents and sequents involving the NMMS logical vocabulary ($\to$, $\land$, $\lor$, $\neg$).

\subsection{Base-level and structural properties}

Table~\ref{tab:structural} summarizes the base-level and structural results. All nine material implications from Table~\ref{tab:end-dialectic} are derivable. The base is nontransitive: $p_2 \bmark p_{18}$ and $p_{18} \bmark p_{23}$ are both derivable, but $p_2 \bmark p_{23}$ is not. The base is nonmonotonic: $p_2 \bmark p_{18}$ holds, but $p_2, p_{23} \bmark p_{18}$ does not---adding $p_{23}$ to the antecedent defeats the inference. All nine material implications are individually expressible as NMMS conditionals via the Deduction-Detachment Theorem (e.g., $\bmark p_2 \to p_{18}$). Classical tautologies hold by supraclassicality (e.g., $\bmark p_2 \lor \neg p_2$). The logical extension is conservative: introducing logical vocabulary does not create new base-level consequences.

\begin{table}[t]
\centering
\tiny
\begin{tabular}{lc}
\toprule
Query & Result \\
\midrule
\multicolumn{2}{l}{\emph{Base consequences (all 9 derivable)}} \\
$p_2 \bmark p_{18}$, $p_3 \bmark p_{27}$, $p_4 \bmark p_{29}$, $p_6 \bmark p_{30}$ & True \\
$p_7 \bmark p_{28}$, $p_9 \bmark p_{25}$, $p_9 \bmark p_{26}$, $p_{10} \bmark p_{24}$, $p_{18} \bmark p_{23}$ & True \\
\addlinespace
\multicolumn{2}{l}{\emph{Nontransitivity}} \\
$p_2 \bmark p_{18}$ and $p_{18} \bmark p_{23}$ & True \\
$p_2 \bmark p_{23}$ & False \\
\addlinespace
\multicolumn{2}{l}{\emph{Nonmonotonicity}} \\
$p_2 \bmark p_{18}$ & True \\
$p_2, p_{23} \bmark p_{18}$ & False \\
$p_9 \bmark p_{25}$ & True \\
$p_9, p_{26} \bmark p_{25}$ & False \\
\addlinespace
\multicolumn{2}{l}{\emph{Explicitation (DDT)}} \\
$\bmark p_2 \to p_{18}$ \ldots\ $\bmark p_{18} \to p_{23}$ (all 9) & True \\
$\bmark p_2 \to p_{23}$ & False \\
\addlinespace
\multicolumn{2}{l}{\emph{Supraclassicality}} \\
$\bmark p_2 \lor \neg p_2$ & True \\
$p_2 \land \neg p_2 \bmark$ & True \\
\addlinespace
\multicolumn{2}{l}{\emph{Conservativity}} \\
$p_2 \bmark p_{23}$ (still nontransitive) & False \\
$p_2, p_{23} \bmark p_{18}$ (still nonmonotonic) & False \\
\bottomrule
\end{tabular}
\caption{Structural properties of the PROV-O material base verified by pyNMMS. All nine base consequences are derivable; the base is nontransitive and nonmonotonic; all base consequences are expressible as NMMS conditionals; the logical extension is supraclassical and conservative.}
\label{tab:structural}
\end{table}

\subsection{Design rationale correspondence}

Table~\ref{tab:rationale} presents the central result: each structural property of the material base corresponds to a specific design rationale documented in Moreau et al.~\cite{moreau2015rationale}. We highlight five correspondences.

\paragraph{Core/extended boundary (EZ3).} The nontransitivity of the Entity chain models Moreau et al.'s requirement that PROV have a minimal core with additional extensions. Both $p_2 \bmark p_{18}$ (Entity entails individuation) and $p_{18} \bmark p_{23}$ (individuation entails Expanded Terms) are derivable, but $p_2 \bmark p_{23}$ is not: the core Entity concept does not automatically pull in extended vocabulary. The hypothetical syllogism $(p_2 \to p_{18}) \land (p_{18} \to p_{23}) \to (p_2 \to p_{23})$ \emph{is} derivable by supraclassicality---it is a classical tautology---but $p_2 \to p_{23}$ alone is not. Logical vocabulary can \emph{state} that the chain is classically valid without \emph{enabling} detachment at the base level: the core/extended boundary is preserved.

\paragraph{Derivation non-transitivity (VI5).} Moreau et al.\ document ISSUE-612 as one of the most debated PROV design decisions: derivation is not mandated to be transitive. The material base enforces this structurally. $p_7 \bmark p_{28}$ (derivation entails broad causal dependency) is derivable, but $p_{28}$ does not compose with other chains. The nontransitivity is not a policy annotation but a property of the consequence relation itself.

\paragraph{Three-views independence (VI1, GE1).} The three PROV views---data flow (Entity), process flow (Activity), and responsibility (Agent)---correspond to independent inference chains in the material base. All eight cross-view queries (e.g., $p_2 \bmark p_{27}$, $p_3 \bmark p_{29}$, $p_4 \bmark p_{18}$) return False. No inference leaks between views.

\paragraph{Scruffy-to-proper refinement (EZ1).} Moreau et al.\ describe the progressive refinement from ``scruffy'' provenance (simple assertions) to ``proper'' provenance (qualified with activities, usages, and generations). Nonmonotonicity models this: $p_7 \bmark p_{28}$ (derivation entails broad causal dependency) holds on its own, but $p_7, p_{24} \bmark p_{28}$ does not---adding chain-type context (proper detail) defeats the scruffy inference.

\paragraph{Retraction leaves no trace (GE3).} The commitment $p_{20}$ (``\texttt{wasDerivedFrom} suffices for cross-context Entity identity'') was retracted during the dialectic. The material base records only defended commitments: $p_7 \bmark p_{18}$, $p_7 \bmark p_{23}$, and $p_{28} \bmark p_{18}$ are all non-derivable, confirming that the retracted commitment left no inferential residue.

\begin{table}[t]
\centering
\tiny
\begin{tabular}{llcc}
\toprule
Requirement & Query & Result & Property \\
\midrule
EZ3 & $p_2 \bmark p_{23}$ & F & Nontrans. \\
EZ3 & $\bmark (p_2 {\to} p_{18}) {\land} (p_{18} {\to} p_{23}) {\to} (p_2 {\to} p_{23})$ & T & Supraclass. \\
EZ3 & $\bmark p_2 \to p_{23}$ & F & Nontrans. \\
\addlinespace
VI5 & $p_7 \bmark p_{28}$ & T & Base \\
VI5 & $p_{28} \bmark p_7$ & F & Directional \\
\addlinespace
VI1 & $p_2 \bmark p_{27}$, $p_2 \bmark p_{29}$ & F & Indep. \\
VI1 & $p_3 \bmark p_{18}$, $p_3 \bmark p_{29}$ & F & Indep. \\
VI1 & $p_4 \bmark p_{18}$, $p_4 \bmark p_{27}$ & F & Indep. \\
VI1 & $p_7 \bmark p_{30}$, $p_6 \bmark p_{28}$ & F & Indep. \\
\addlinespace
EZ1 & $p_7 \bmark p_{28}$ & T & Base \\
EZ1 & $p_7, p_{24} \bmark p_{28}$ & F & Nonmon. \\
\addlinespace
XG11 & $p_9 \bmark p_{25}$, $p_9 \bmark p_{26}$ & T & Multi-succ. \\
XG11 & $p_9, p_{25} \bmark p_{26}$ & F & Nonmon. \\
\addlinespace
EV1 & $p_3 \bmark p_{27}$ & T & Base \\
EV1 & $p_3 \bmark p_{18}$, $p_3 \bmark p_{25}$ & F & Indep. \\
\addlinespace
GE3 & $p_7 \bmark p_{18}$, $p_7 \bmark p_{23}$ & F & Retraction \\
GE3 & $p_{18} \bmark p_{23}$ & T & Actual path \\
\bottomrule
\end{tabular}
\caption{Design rationale queries mapping NMMS reasoning results to requirements documented in Moreau et al.\ \protect\shortcite{moreau2015rationale}. Each row shows a query against the PROV-O material base, its result, and the structural property it demonstrates. F = False (non-derivable), T = True (derivable).}
\label{tab:rationale}
\end{table}

\subsection{Pairwise independence of design resolutions}

The composite results confirm that the material base is well-structured as a system of design decisions. We define seven chain groups corresponding to the seven challenges in Table~\ref{tab:end-dialectic}: Entity ($\{p_{18}, p_{23}\}$), Activity ($\{p_{27}\}$), Agent ($\{p_{29}\}$), wasInformedBy ($\{p_{30}\}$), wasDerivedFrom ($\{p_{28}\}$), delegation ($\{p_{25}, p_{26}\}$), and chain-type ($\{p_{24}\}$). Testing all cross-chain pairs yields 34 pairs, all non-derivable: every design resolution from one challenge is inferentially independent of every resolution from every other challenge. This confirms that each Elenchus challenge addressed a genuinely distinct design tension, and that the material base preserves this distinctness.

Containment is verified for all propositions: $p \bmark p$ holds for every $p \in L_{\mathfrak{B}_\mathcal{S}}$, confirming that the base satisfies the minimal coherence constraint required for the NMMS logical extension.

\subsection{Discussion}

These results show how Elenchus can support an end-to-end workflow from natural language dialogue through material base construction to formal reasoning over the logical extension. The structural properties of the material base---nontransitivity, nonmonotonicity, conservativity, and independence---are not merely formal curiosities. They correspond to specific, well-documented design decisions in the PROV standard. The material base produced by a single Elenchus dialogue session over a 350-word prose specification yields a formal artifact whose properties can be mechanically verified and whose structure aligns with decisions reconstructed by Moreau et al.\ from 8820 emails, 666 issues, and 152 teleconferences.

The explicitation results demonstrate the pattern central to the inferentialist program: logical vocabulary makes explicit what is already implicit in the material base. Each design decision can be expressed as an NMMS conditional ($\bmark p_2 \to p_{18}$), and the conjunction of these conditionals can be stated, but the logical vocabulary does not alter the base-level consequence relation. The core/extended boundary (EZ3) provides a particularly clear illustration: the hypothetical syllogism is derivable by supraclassicality, yet the transitive collapse is not---logical vocabulary can describe the chain without collapsing it.

\section{Related work}
\label{sect:related}

\subsection{Knowledge acquisition methods}

Knowledge acquisition has been studied since the earliest expert systems. The "knowledge acquisition bottleneck" was identified by Feigenbaum \shortcite{feigenbaum1977art} and has motivated decades of work on structured elicitation methods including repertory grids \cite{boose1985}, protocol analysis \cite{ericsson1984}, and ontology learning from text \cite{cimiano2006}. CommonKADS \cite{schreiber2000knowledge} provides a comprehensive methodology treating knowledge engineering as modeling, not mining—a perspective Elenchus shares. However, CommonKADS and related methodologies (METHONTOLOGY \cite{fernandez1997methontology}, NeOn \cite{neon2010}) assume the target is a description of the domain in a formal language, with the knowledge engineer mediating between expert and formalism. Elenchus eliminates this mediating step: the expert interacts directly with the opponent through natural language, and the formal structure (the material base) is constructed as a byproduct of the dialogue.
The Socratic method has been applied to knowledge elicitation before. SHAKEN \cite{clark2001knowledge} used structured dialogue to acquire knowledge from subject matter experts, and the Knowledge Machine \cite{clark1997building} employed question-driven elicitation. These systems, however, treat the acquired knowledge representationally—the dialogue is a means to extract content that is then encoded in a pre-existing formalism. Elenchus treats the dialogue as constitutive: the material base just is the structured record of commitments, denials, and accepted material implications from the dialogue.

\subsection{LLMs for knowledge base construction}

Recent work has explored LLMs for ontology construction \cite{babaei2023llms40l}, knowledge graph completion \cite{yao2025exploring}, and information extraction \cite{wei2023chatie}. These approaches typically use LLMs to generate triples, axioms, or class hierarchies, treating the LLM as a source of knowledge to be validated. The quality concern is well-documented: LLMs hallucinate, confabulate, and produce plausible but incorrect formalizations.
Elenchus takes a fundamentally different stance. The LLM is not a source of knowledge but a dialectical partner—a defeasible derivability oracle that proposes candidate tensions for the respondent to accept or contest. The respondent, not the LLM, is the epistemic authority. This design transforms the hallucination problem from a reliability issue into a feature of the protocol: an LLM-proposed tension that does not reflect genuine incoherence is simply contested by the respondent, and the contestation itself is part of the dialectical record. False positives are filtered; the cost of hallucination is at most a wasted challenge, not a corrupted knowledge base.
This contrasts with neurosymbolic approaches \cite{allen2025sound} where LLM unreliability is managed through paraconsistent reasoning \cite{belnap1977how}. In Elenchus, LLM unreliability is structurally contained by the respondent's authority over (and accountability for) tension resolution.

\subsection{Argumentation frameworks}

Elenchus shares surface structure with computational argumentation. Dung's \shortcite{dung1995} abstract argumentation frameworks define conflict relations over arguments; ASPIC+ \cite{modgil2014} provides structured argumentation with defeasible and strict rules; and dialogue-based argumentation protocols \cite{prakken2006} formalize multi-agent debate. The relationship to Elenchus requires careful differentiation along three dimensions.
First, the consequence relation. In standard argumentation frameworks, the consequence relation (what follows from what) is fixed in advance—typically classical logic or a specified defeasible logic. 
In Elenchus, the consequence relation is \emph{itself the output}. 
Second, the role of the opponent. In adversarial argumentation, the opponent seeks to defeat the proponent's arguments. 
This is closer to the proof-checking role of the skeptic in dialogical logic \cite{lorenzen1978} than to an adversary in a debate.
Third, the output. Argumentation frameworks produce extensions (sets of acceptable arguments) or labelings. Elenchus produces a material base—a substructural consequence relation that can serve as input to the NMMS sequent calculus. 
This connects knowledge acquisition directly to the inferentialist program in philosophical logic, rather than to the argumentation theory tradition.
That said, there is productive overlap. Bipolar argumentation frameworks \cite{cayrol2005}, which include both attack and support relations, have structural affinity with bilateral positions. And the connection between argumentation-based dialogue and nonmonotonic reasoning \cite{governatori2004} suggests that existing computational argumentation infrastructure could be leveraged in implementing Elenchus. The key theoretical distinction remains: Elenchus constructs the consequence relation rather than reasoning within a pre-given one.

The distinction from computational argumentation is not merely one of emphasis. Dialogic logic formalizations in the Lorenzen-Lorenz tradition operate at a level of abstraction remote from realistic human dialogical practice; Elenchus follows Dutilh Novaes's prover-skeptic analysis, which is situated in the cognitive and social roots of reasoning rather than in formal dialogue semantics. The relevant comparison is not with argumentation frameworks that adjudicate conflicts among pre-existing arguments, but with the question of how a consequence relation comes to be constructed in the first place.

\subsection{LLM-assisted ontology engineering systems}

Much recent work in LLM-assisted ontology engineering has centered on competency questions (CQs), natural language questions that express an ontology's functional requirements~\cite{gruninger1995role}. OntoChat~\shortcite{zhang2024ontochat} is a conversational framework that uses LLMs to support ontology requirements elicitation through four functions: user story creation, CQ extraction, CQ filtration and clustering, and ontology testing via verbalization. Zhao et al. \shortcite{zhao2024improving} extend OntoChat with participatory prompting, addressing the finding that domain experts struggle to prompt LLMs effectively without researcher mediation. RevOnt \cite{ciroku2024revont} reverses the traditional CQ workflow, using language models to extract competency questions from existing knowledge graphs rather than eliciting them from experts — demonstrating that RDFS-level knowledge graphs contain sufficient structure to reconstruct requirements. Most recently, Koutsiana et al. \shortcite{koutsiana2024knowledge} report an ethnographic study of knowledge engineers working with generative AI at a hackathon, finding that LLMs can improve efficiency in knowledge graph construction but that prompting is an undervalued skill and evaluation of LLM outputs remains the central challenge. Kampars et al.~\shortcite{kampars2025llm} extend the NeOn methodology with LLM-based automation while retaining domain expert-in-the-loop validation. Garijo Verdejo et al.~\shortcite{garijo2024llms} survey the landscape of LLM applications to ontology engineering tasks, finding that most work targets early development phases.

In all these systems, the LLM serves as a facilitator or generator: it elicits requirements, produces ontology fragments, or suggests competency questions, and the expert validates the output. This workflow inherits the representationalist assumption discussed in Section \ref{sect:intro}. The expert is treated as a source — of user stories, of competency questions, of validation judgments — rather than as an agent whose commitments are tested for coherence. Elenchus inverts this relationship: the LLM challenges the expert's commitments, and the expert's responses to those challenges constitute the knowledge base.

The formal outputs also differ in a way that goes beyond format. Competency questions and OWL fragments lack a formal characterization of what makes them collectively a knowledge base: there is no characterized consequence relation, no proven structural properties, and no systematic traceability from output to the process that produced it. Elenchus produces material bases satisfying Containment, with proven supraclassicality, conservative extension, and explicitation properties, and complete traceability of every material implication to a specific dialogue move. The CQ-based approach evaluates its outputs through user satisfaction; Elenchus evaluates its outputs through formal logic.

Several challenges identified in this programme are addressed structurally by Elenchus. The finding that requirements elicitation is the activity most in need of computational support~\cite{zhang2024ontochat} motivates Elenchus directly, though it reconceives the task as dialectical examination rather than content generation. The finding that experts struggle to prompt LLMs effectively and need structured mediation~\cite{zhao2024improving} is addressed by the prover-skeptic protocol: the expert responds to challenges rather than formulating prompts, and the bilateral position framework constrains the interaction without requiring the expert to understand the underlying formalism. The finding that evaluation of LLM outputs is the central difficulty~\cite{koutsiana2024knowledge} is dissolved by the Elenchus architecture: the LLM proposes tensions rather than generating artifacts, and the expert's acceptance or contestation is the evaluation.

\subsection{Multi-agent debate frameworks for improving LLM reasoning}
Irving et al.~\shortcite{irving2018ai} propose AI safety via debate, in which two AI agents argue before a human judge; the adversarial structure is intended to make the correct answer easier to identify than to generate. Li et al.~\shortcite{li2024improving} and Liang et al.~\shortcite{liang2024encouraging} instantiate multi-agent debate with LLMs, showing that multiple model instances critiquing each other's reasoning reduce hallucination and improve factual accuracy. Khan et al.~\shortcite{khan2024debating} demonstrate that debate improves judge accuracy when debaters have access to information the judge lacks---an information asymmetry analogous to the LLM opponent surfacing inferential connections the respondent has not considered.  Success is measured by benchmark accuracy or complexity-theoretic expressiveness~\cite{irving2018ai}.

Elenchus shares with these systems the insight that adversarial structure can make unreliable AI outputs epistemically useful. The differences are structural. Debate frameworks are LLM-to-LLM systems aimed at answer quality; Elenchus is a human-AI system aimed at knowledge construction. In debate, the output is a consensus answer; in Elenchus, it is a structured consequence relation with full dialogical provenance. In debate, the human is a judge who selects between positions; in Elenchus, the human is the \emph{author} of the position, with authority over which inferences hold.

\section{Limitations and future work}

\subsection{Automated reasoning over the logical extension to the material base}
\label{sect:limitations-reasoning}

Section~\ref{sect:reasoning} demonstrates integration with pyNMMS using the propositional fragment of the NMMS sequent calculus, addressing the gap between material base construction and formal reasoning. Remaining work includes optimizing proof search for larger material bases (the current implementation uses exponential-time backward search with memoization) and integrating the reasoner into the Elenchus agent loop so that NMMS queries can inform the opponent's challenge selection during the dialectic.

\subsection{Propositional versus first-order logical vocabularies} 

The base language $L_{\mathfrak{B}_S}$ consists of atomic propositional sentences introduced through dialogue. To allow the expression of quantified commitments like "all birds fly", an experimental extension to the pyNMMS package, \texttt{pynmms.onto}, provides seven defeasible ontology axiom schema types — subClassOf, range, domain, subPropertyOf, disjointWith, disjointProperties, and jointCommitment — that generate families of base axioms over concept and role assertions, evaluated lazily at query time.\footnote{\url{https://www.bradleypallen.org/pyNMMS/theory/onto-extension/}} This gives expressiveness comparable to RDFS 1.1 \cite{BrickleyGuha2014} — class hierarchies, property hierarchies, domain and range constraints — augmented with material incompatibility and joint inferential commitment, avoiding the need for a first-order extension to NMMS to support the ontological reasoning patterns most common in knowledge engineering. Modifying the Elenchus system prompt to allow the respondent to make ontological commitments using these schema types, and to map the resulting dialectical state to an \texttt{OntoMaterialBase}, is future work.

\subsection{Systematic oracle bias} 
\label{sect:bias}

The LLM-as-oracle design handles false positives through contestation: incorrectly proposed tensions are rejected by the respondent. However, the protocol does not address false negatives—inferential relationships the LLM fails to surface because they fall outside its training distribution or reasoning capabilities. The resulting material base may have systematic lacunae invisible to both the respondent (who was never prompted to consider them) and the opponent (which cannot identify what it cannot identify). Future work will address mitigating this limitation.

\subsection{Scalability}

The current case study operates over a single section of a specification. Scaling to larger domains will require managing respondent cognitive load and LLM context limitations across longer dialectics. However, the GitHub-backed architecture already supports asynchronous, multi-session interaction, and the modular structure of the dialectical state — where each tension is independently resolvable — suggests that the protocol can decompose naturally along domain boundaries.

\subsection{Evaluation} 

The case study presented here is illustrative rather than evaluative; it is an existence proof that the approach can work end-to-end in the context of an ontology engineering task. The comparison validates the protocol's coverage and provides evidence that the protocol produces well-structured output aligned with independently documented design decisions, but does not provide evidence of discovery or epistemic novelty. A rigorous evaluation of the approach would require comparison with alternative knowledge acquisition methods on the same domain, assessment of inter-respondent consistency (do different experts produce comparable material bases for the same domain?), and measurement of the coverage of the resulting knowledge base against gold-standard formalizations. We leave such evaluation to future work while noting that the traceability of Elenchus outputs (Section \ref{sect:traceability}) facilitates such studies: every atomic proposition and material implication can be traced to a specific dialogue move.

\section{Conclusion}
\label{sect:conclusion}

We have presented Elenchus, a bilateral dialectical protocol for generating knowledge bases from prover-skeptic dialogues. Our main result is that Elenchus dialectical states map to material bases satisfying Containment---knowledge bases in the inferentialist sense---which can be further enriched with logical vocabulary via NMMS. The knowledge base is fully traceable: every material implication originates in a specific dialogue move.
Using pyNMMS, we have verified that the material base produced by a single dialogue session over a PROV-O specification exhibits the structural properties predicted by the theory---nontransitivity, nonmonotonicity, supraclassicality, conservativity---and that these properties correspond to specific design rationales documented independently in Moreau et al. \shortcite{moreau2015rationale}. All cross-chain design resolutions are pairwise independent, confirming that the Elenchus dialectic identified genuinely distinct design tensions.
The approach re-conceives knowledge engineering as dialogical explicitation of expert practice, rather than extraction of pre-formed content. The LLM opponent serves as a defeasible derivability oracle, surfacing candidate tensions while the human respondent retains authority over which inferences hold. The structure of the mapping itself exhibits the explicitation pattern central to the inferentialist program: the two components of the base consequence relation make explicit, respectively, what the dialogue produced and what it presupposed.
Future work will explore the use of the ontology engineering extensions to pyNMMS and the integration of the pyNMMS reasoner into the Elenchus agent loop, toward the goal of an end-to-end inferentialist knowledge engineering framework.

\section*{Acknowledgments}
The author wishes to thank Robert Brandom, Kris Brown, Thomas Ferguson, Paul Groth, Ulf Hlobil, Filip Ilievski, Teresa Kouri Kissel, Shay Logan, Marcus Rossberg, and members of the Research Group On Logical Expressivism (ROLE) for their comments and feedback.

\bibliographystyle{kr}
\bibliography{bibliography}

@inproceedings{allen2023conceptual,
 author = {Bradley P. Allen},
 title = {{Conceptual Engineering Using Large Language Models}},
 year = {2025},
 booktitle = {Philosophy of Artificial Intelligence: The State of the Art)},
 publisher = {Springer Nature},
 editor = {Vincent C. Müller, Leonard Dung, Guido Löhr and Aliya Rumana},
 url = {https://doi.org/10.48550/arXiv.2404.03732},
 note = {To appear.}
}

@inproceedings{allen2024evaluating,
  author    = {Bradley P. Allen and Paul Groth},
  title     = {{Evaluating Class Membership Relations in Knowledge Graphs using Large Language Models}},
  booktitle = {The Semantic Web: ESWC 2024 Satellite Events},
  editor    = {Albert Meroño-Peñuela and Óscar Corcho and Paul Groth and Elena Simperl and Valentina Tamma and Andrea Giovanni Nuzzolese and María Poveda-Villalón and Marta Sabou and Valentina Presutti and Irene Celino and Artem Revenko and Joe Raad and Bruno Sartini and Pasquale Lisena},
  series    = {Lecture Notes in Computer Science},
  volume    = {15344},
  publisher = {Springer},
  address   = {Cham},
  year      = {2025},
  isbn      = {978-3-031-78951-9},
  doi       = {10.1007/978-3-031-78952-6_2},
  url       = {https://doi.org/10.1007/978-3-031-78952-6_2}
}

@inproceedings{allen2025benchmark,
  author    = {Bradley P. Allen and Paul T. Groth},
  title     = {{A Benchmark for the Detection of Metalinguistic Disagreements between {LLMs} and Knowledge Graphs}},
  booktitle = {Proceedings of the Special Session on Harmonising Generative AI and Semantic Web Technologies (HGAIS 2024) co-located with the 23rd International Semantic Web Conference (ISWC 2024)},
  series    = {{CEUR} Workshop Proceedings},
  volume    = {3953},
  publisher = {CEUR-WS.org},
  year      = {2025},
  url       = {https://ceur-ws.org/Vol-3953/},
  address   = {Baltimore, Maryland}
}

@InProceedings{allen2025sound,
  title = 	 {{Sound and Complete Neurosymbolic Reasoning with LLM-Grounded Interpretations}},
  author =       {Allen, Bradley P. and Chhikara, Prateek and Ferguson, Thomas Macaulay and Ilievski, Filip and Groth, Paul},
  booktitle = 	 {Proceedings of The 19th International Conference on Neurosymbolic Learning and Reasoning},
  pages = 	 {392--419},
  year = 	 {2025},
  editor = 	 {H. Gilpin, Leilani and Giunchiglia, Eleonora and Hitzler, Pascal and van Krieken, Emile},
  volume = 	 {284},
  series = 	 {Proceedings of Machine Learning Research},
  publisher =    {PMLR},
  url = 	 {https://proceedings.mlr.press/v284/allen25a.html},
}

@incollection{belnap1977how,
  author={Belnap, Nuel},
  booktitle = {Contemporary aspects of philosophy},
  editor = {Ryle, G.},
  pages = {30--55},
  publisher = {Oriel Press},
  title = {{How a computer should think}},
  year = {1977}
}

@inproceedings{feigenbaum1977art,
 author = {Feigenbaum, Edward A},
 booktitle = {Proceedings of the Fifth International Joint Conference on Artificial Intelligence},
 doi = {10.21236/ada046289},
 organization = {Boston},
 title = {{The art of artificial intelligence: Themes and case studies of knowledge engineering}},
 volume = {2},
 year = {1977}
}

@inproceedings{fernandez1997methontology,
  title={{Methontology: From ontological art towards ontological engineering}},
  author={Fern{\'a}ndez-L{\'o}pez, Mariano and G{\'o}mez-P{\'e}rez, Asunci{\'o}n and Juristo, Natalia},
  booktitle={Proceedings of the AAAI-97 Spring Symposium Series on Ontological Engineering},
  pages={33--40},
  year={1997}
}

@phdthesis{neon2010,
 address = {Madrid, Spain},
 author = {del Carmen Su\'{a}rez de Figueroa Baonza, Mar\'{i}a},
 doi = {10.20868/upm.thesis.3879},
 month = {June},
 school = {Universidad Polit\'{e}cnica de Madrid},
 title = {{NeOn Methodology for Building Ontology Networks: Specification, Scheduling
and Reuse}},
 url = {http://oa.upm.es/3879/2/MARIA_DEL-_CARMEN_SUAREZ_DE_FIGUEROA_BAONZA.pdf},
 year = {2010}
}

@article{koutsiana2024knowledge,
  title={{Knowledge Prompting: How Knowledge Engineers Use Large Language Models}},
  author={Koutsiana, Elisavet and Walker, Johanna and Nwachukwu, Michelle and Mero{\~n}o-Pe{\~n}uela, Albert and Simperl, Elena},
  journal={arXiv preprint arXiv:2408.08878},
  year={2024}
}

@book{dutilh2012formal,
 author = {Dutilh Novaes, Catarina},
 publisher = {Cambridge University Press},
 title = {{Formal languages in logic: A philosophical and cognitive analysis}},
 year = {2012}
}

@book{novaes2020dialogical,
 author = {Dutilh Novaes, Catarina},
 doi = {10.1017/9781108800792},
 publisher = {Cambridge University Press},
 title = {{The dialogical roots of deduction: Historical, cognitive, and philosophical perspectives on reasoning}},
 year = {2020}
}

@book{schreiber2000knowledge,
 author = {Schreiber, August Th and Schreiber, Guus and Akkermans, Hans and Anjewierden, Anjo and Shadbolt, Nigel and de Hoog, Robert and Van de Velde, Walter and Wielinga, Bob},
 publisher = {MIT Press},
 title = {{Knowledge engineering and management: the CommonKADS methodology}},
 year = {2000}
}

@article{studer1998knowledge,
 author = {Studer, Rudi and Benjamins, V Richard and Fensel, Dieter},
 doi = {10.1016/S0169-023X(97)00056-6},
 journal = {Data \& knowledge engineering},
 number = {1-2},
 pages = {161--197},
 publisher = {Elsevier},
 title = {{Knowledge engineering: Principles and methods}},
 volume = {25},
 year = {1998}
}

@misc{claudecode2025,
  title = {{Claude Code Overview}},
  author = {Anthropic},
  year = {2025},
  url = {https://docs.claude.com/en/docs/claude-code/overview},
  note = {URL accessed: 2025-10-14}
}

@article{forsythe1993engineering,
  title={{Engineering knowledge: The construction of knowledge in artificial intelligence}},
  author={Forsythe, Diana E},
  journal={Social studies of science},
  volume={23},
  number={3},
  pages={445--477},
  year={1993},
  publisher={Sage Publications}
}

@book{brandom1994making,
  title={{Making it explicit: Reasoning, representing, and discursive commitment}},
  author={Brandom, Robert},
  year={1994},
  publisher={Harvard University Press}
}

@inproceedings{restall2005multiple,
  title={{Multiple conclusions}},
  author={Restall, Greg},
  booktitle={Logic, methodology and philosophy of science: Proceedings of the twelfth international congress},
  pages={189--205},
  year={2005},
  organization={Kings College Publications}
}

@book{hlobil2025reasons,
  title={{Reasons for logic, logic for reasons: Pragmatics, semantics, and conceptual roles}},
  author={Hlobil, Ulf and Brandom, Robert B},
  year={2025},
  publisher={Routledge}
}

@misc{github2025issues,
  author = {{GitHub}},
  title = {About issues},
  year = {2025},
  howpublished = {URL accessed: 2025-02-01},
  url = {https://docs.github.com/en/issues/tracking-your-work-with-issues/about-issues}
}

@misc{allen2025elenchus,
  author = {Allen, Bradley P.},
  title = {{Elenchus: Dialectical Opponent for Research}},
  year = {2025},
  howpublished = {GitHub repository},
  url = {https://github.com/bradleypallen/elenchus}
}

@incollection{dutilh2025proofs,
  title={{Proofs as Dialogues: The Enduring Significance of Lakatos for the Philosophy of Mathematical Practice}},
  author={Dutilh Novaes, Catarina},
  booktitle={Proofs and Research Programmes: Lakatos at 100},
  pages={27--46},
  year={2025},
  publisher={Springer}
}

@article{sellars1953inference,
  title={Inference and meaning},
  author={Sellars, Wilfrid},
  journal={Mind},
  volume={62},
  number={247},
  pages={313--338},
  year={1953},
  publisher={JSTOR}
}

@techreport{lebo2013prov,
  author      = {Timothy Lebo and Satya Sahoo and Deborah McGuinness},
  title       = {{PROV-O: The PROV Ontology}},
  institution = {W3C},
  year        = {2013},
  type        = {W3C Recommendation},
  url         = {https://www.w3.org/TR/2013/REC-prov-o-20130430/},
  note        = {https://www.w3.org/TR/prov-o/}
}

@article{moreau2015rationale,
  title={{The rationale of PROV}},
  author={Moreau, Luc and Groth, Paul and Cheney, James and Lebo, Timothy and Miles, Simon},
  journal={Journal of Web Semantics},
  volume={35},
  pages={235--257},
  year={2015},
  publisher={Elsevier}
}

@inproceedings{clark2001knowledge,
  title={Knowledge entry as the graphical assembly of components},
  author={Clark, Peter and Thompson, John and Barker, Ken and Porter, Bruce and Chaudhri, Vinay and Rodriguez, Andres and Thomere, Jerome and Mishra, Sunil and Gil, Yolanda and Hayes, Pat and others},
  booktitle={{Proceedings of the 1st International Conference on Knowledge Capture}},
  pages={22--29},
  year={2001}
}

@article{boose1985,
  author    = {Boose, John H.},
  title     = {A Knowledge Acquisition Program for Expert Systems Based on Personal Construct Psychology},
  journal   = {International Journal of Man-Machine Studies},
  volume    = {23},
  number    = {5},
  pages     = {495--525},
  year      = {1985}
}

@book{ericsson1984,
  author    = {Ericsson, K. Anders and Simon, Herbert A.},
  title     = {Protocol Analysis: Verbal Reports as Data},
  publisher = {MIT Press},
  year      = {1984}
}

@book{cimiano2006,
  author    = {Cimiano, Philipp},
  title     = {Ontology Learning and Population from Text: Algorithms, Evaluation and Applications},
  publisher = {Springer},
  year      = {2006}
}

@inproceedings{clark1997building,
  title={Building concept representations from reusable components},
  author={Clark, Peter and Porter, Bruce},
  booktitle={Proceedings of the fourteenth national conference on artificial intelligence and ninth conference on Innovative applications of artificial intelligence},
  pages={369--376},
  year={1997}
}

@inproceedings{babaei2023llms40l,
  author    = {Babaei Giglou, Hamed and D'Souza, Jennifer and Auer, S{\"o}ren},
  title     = {{LLMs4OL}: Large Language Models for Ontology Learning},
  booktitle = {Proceedings of the International Semantic Web Conference},
  year      = {2023}
}

@inproceedings{yao2025exploring,
  title={Exploring large language models for knowledge graph completion},
  author={Yao, Liang and Peng, Jiazhen and Mao, Chengsheng and Luo, Yuan},
  booktitle={ICASSP 2025-2025 IEEE International Conference on Acoustics, Speech and Signal Processing (ICASSP)},
  pages={1--5},
  year={2025},
  organization={IEEE}
}

@article{wei2023chatie,
  title={{Chatie: Zero-shot Information Extraction via Chatting with ChatGPT}},
  author={Wei, Xiang and Cui, Xingyu and Cheng, Ning and Wang, Xiaobin and Zhang, Xin and Huang, Shen and Xie, Pengjun and Xu, Jinan and Chen, Yufeng and Zhang, Meishan and others},
  journal={arXiv preprint arXiv:2302.10205},
  year={2023}
}

@article{dung1995,
  author    = {Dung, Phan Minh},
  title     = {On the Acceptability of Arguments and its Fundamental Role in Nonmonotonic Reasoning, Logic Programming and n-Person Games},
  journal   = {Artificial Intelligence},
  volume    = {77},
  number    = {2},
  pages     = {321--357},
  year      = {1995}
}

@article{modgil2014,
  author    = {Modgil, Sanjay and Prakken, Henry},
  title     = {The {ASPIC+} Framework for Structured Argumentation: A Tutorial},
  journal   = {Argument \& Computation},
  volume    = {5},
  number    = {1},
  pages     = {31--62},
  year      = {2014}
}

@article{prakken2006,
  author    = {Prakken, Henry},
  title     = {Formal Systems for Persuasion Dialogue},
  journal   = {The Knowledge Engineering Review},
  volume    = {21},
  number    = {2},
  pages     = {163--188},
  year      = {2006}
}

@book{lorenzen1978,
  author    = {Lorenzen, Paul and Lorenz, Kuno},
  title     = {Dialogische Logik},
  publisher = {Wissenschaftliche Buchgesellschaft},
  year      = {1978}
}

@inproceedings{cayrol2005,
  author    = {Cayrol, Claudette and Lagasquie-Schiex, Marie-Christine},
  title     = {On the Acceptability of Arguments in Bipolar Argumentation Frameworks},
  booktitle = {Proceedings of the 8th European Conference on Symbolic and Quantitative Approaches to Reasoning with Uncertainty},
  pages     = {378--389},
  year      = {2005}
}

@article{governatori2004,
  author    = {Governatori, Guido and Maher, Michael J. and Antoniou, Grigoris and Billington, David},
  title     = {Argumentation Semantics for Defeasible Logic},
  journal   = {Journal of Logic and Computation},
  volume    = {14},
  number    = {5},
  pages     = {675--702},
  year      = {2004}
}

@inproceedings{zhang2024ontochat,
  title={Ontochat: a framework for conversational ontology engineering using language models},
  author={Zhang, Bohui and Carriero, Valentina Anita and Schreiberhuber, Katrin and Tsaneva, Stefani and Gonz{\'a}lez, Luc{\'\i}a S{\'a}nchez and Kim, Jongmo and de Berardinis, Jacopo},
  booktitle={European Semantic Web Conference},
  pages={102--121},
  year={2024},
  organization={Springer}
}

@article{kampars2025llm,
  title={LLM-supported collaborative ontology design for data and knowledge management platforms},
  author={Kampars, Janis and Mosans, Guntis and Jogi, Tushar and Roters, Franz and Vajragupta, Napat},
  journal={Frontiers in big Data},
  volume={8},
  pages={1676477},
  year={2025},
  publisher={Frontiers Media SA}
}

@inproceedings{garijo2024llms,
  title={Llms for ontology engineering: a landscape of tasks and benchmarking challenges},
  author={Garijo, Daniel and Poveda-Villal{\'o}n, Mar{\'\i}a and Amador-Dom{\'\i}nguez, Elvira and Wang, Z and Garc{\'\i}a-Castro, Ra{\'u}l and Corcho, Oscar},
  booktitle={The Semantic Web-ISWC},
  year={2024}
}

@article{irving2018ai,
  title={AI safety via debate},
  author={Irving, Geoffrey and Christiano, Paul and Amodei, Dario},
  journal={arXiv preprint arXiv:1805.00899},
  year={2018}
}

@article{li2024improving,
  title={Improving multi-agent debate with sparse communication topology},
  author={Li, Yunxuan and Du, Yibing and Zhang, Jiageng and Hou, Le and Grabowski, Peter and Li, Yeqing and Ie, Eugene},
  journal={arXiv preprint arXiv:2406.11776},
  year={2024}
}

@inproceedings{liang2024encouraging,
  title={Encouraging divergent thinking in large language models through multi-agent debate},
  author={Liang, Tian and He, Zhiwei and Jiao, Wenxiang and Wang, Xing and Wang, Yan and Wang, Rui and Yang, Yujiu and Shi, Shuming and Tu, Zhaopeng},
  booktitle={Proceedings of the 2024 conference on empirical methods in natural language processing},
  pages={17889--17904},
  year={2024}
}

@article{khan2024debating,
  title={Debating with more persuasive llms leads to more truthful answers},
  author={Khan, Akbir and Hughes, John and Valentine, Dan and Ruis, Laura and Sachan, Kshitij and Radhakrishnan, Ansh and Grefenstette, Edward and Bowman, Samuel R and Rockt{\"a}schel, Tim and Perez, Ethan},
  journal={arXiv preprint arXiv:2402.06782},
  year={2024}
}

@misc{pynmms2025,
   author = {Allen, Bradley P.},
   title = {{pyNMMS}},
   year = {2026},
   howpublished = {{PyPI package}},
   url = {https://pypi.org/project/pyNMMS/}
}

@inproceedings{zhao2024improving,
  title={Improving ontology requirements engineering with Ontochat and participatory prompting},
  author={Zhao, Yihang and Zhang, Bohui and Hu, Xi and Ouyang, Shuyin and Kim, Jongmo and Jain, Nitisha and De Berardinis, Jacopo and Mero{\~n}o-Pe{\~n}uela, Albert and Simperl, Elena},
  booktitle={Proceedings of the AAAI Symposium Series},
  volume={4:1},
  pages={253--257},
  year={2024}
}

@article{ciroku2024revont,
  title={Revont: Reverse engineering of competency questions from knowledge graphs via language models},
  author={Ciroku, Fiorela and de Berardinis, Jacopo and Kim, Jongmo and Mero{\~n}o-Pe{\~n}uela, Albert and Presutti, Valentina and Simperl, Elena},
  journal={Journal of Web Semantics},
  volume={82},
  pages={100822},
  year={2024},
  publisher={Elsevier}
}

@incollection{gruninger1995role,
  title={The role of competency questions in enterprise engineering},
  author={Gr{\"u}ninger, Michael and Fox, Mark S},
  booktitle={Benchmarking—Theory and practice},
  pages={22--31},
  year={1995},
  publisher={Springer}
}

@techreport{BrickleyGuha2014,
  author       = {Dan Brickley and R.V. Guha},
  title        = {{RDF Schema 1.1}},
  institution  = {W3C},
  type         = {{W3C} Recommendation},
  year         = {2014},
  month        = feb,
  day          = {25},
  url          = {https://www.w3.org/TR/rdf11-schema/},
  note         = {URL accessed: 2025-02-26}
}

\end{document}